\documentclass{article} 
\usepackage{iclr2027_conference,times}

\usepackage{hyperref}
\usepackage{url}

\usepackage{enumitem}
\usepackage{placeins}
\usepackage{wrapfig}
\usepackage{booktabs}
\usepackage{graphicx}
\usepackage[cmex10]{amsmath}
\usepackage{amssymb}
\usepackage{algorithmic}
\usepackage{algorithm}
\usepackage{amsthm}
\usepackage{tcolorbox}
\newtheorem{theorem}{Theorem}

\newtheorem{proposition}{Proposition}

\DeclareMathAlphabet\mathbfcal{OMS}{cmsy}{b}{n}
\newcommand{\ten}[1]{\mathbfcal{#1}}
\newcommand{\mat}[1]{\mathbf{#1}}
\newcommand{\vect}[1]{\boldsymbol{#1}}

\newcommand{\softmax}{\operatorname{softmax}}
\newcommand{\tuba}{\operatorname{TuBA}}

\title{Tucker Bottleneck Attention for Multi-Dimensional Sequence Modeling}

\author{Ryan Solgi 
\\
University of California-Santa Barbara\\
Santa Barbara, CA, USA\\
\texttt{solgi@ucsb.edu} \\
\And
Parsa Madinei\\
University of California-Santa Barbara\\
Santa Barbara, CA, USA\\
\texttt{madinei@ucsb.edu}\\
\And
Zheng Zhang\\
University of California-Santa Barbara\\
Santa Barbara, CA, USA\\
\texttt{zhengzhang@ece.ucsb.edu}
}

\iclrfinalcopy 
\begin{document}

\maketitle
\lhead{}
\begin{abstract} 

The quadratic cost of self-attention limits scalability to long sequences from multidimensional data. We introduce Tucker bottleneck attention (TuBA), which exploits low-rank tensor structure for efficient global token mixing. TuBA projects hidden tensors into compact Tucker cores, performs multi-head self-attention and linear projections on the cores, and writes updates back to the ambient space, enabling subquadratic computation. Its autoregressive extension combines bidirectional interactions within cores with causal attention across cores. On video prediction and global weather forecasting, TuBA achieves favorable accuracy--efficiency trade-offs over standard and efficient attention and task-specific models. Compared to standard self-attention, TuBA reduces error and computation by up to $24.7\%$ and $66.6\%$ for video prediction and $37.1\%$ and $85.1\%$ for autoregressive weather forecasting, with speedups up to $4.27\times$. Low-rank Tucker cores and multi-frame generation also outperform full-rank attention and frame-by-frame generation, respectively.
\end{abstract}


\section{Introduction}

Self-attention, the primary token-mixing mechanism in transformers~\citep{vaswani2017attention}, has driven major advances in machine learning~\citep{devlin2019bert,brown2020language,dosovitskiy2021image,nguyen2023climax}, but its quadratic cost limits scalability to long sequences. Consequently, extensive work has sought to improve attention efficiency~\citep{tay2022efficient}. Some methods restrict attention to local or sparse patterns~\citep{swin,child2019sparse,longformer,bigbird,axial}, while others replace softmax with kernel feature maps to avoid forming the attention matrix~\citep{linear,cosformer,performer,featuremapattention}, or compute attention using a smaller set of latent tokens~\citep{set,luna,perceiver}. Other approaches exploit low-rank attention structure. Linformer~\citep{linformer}, for example, projects keys and values into a low-dimensional space to achieve linear attention complexity, while related methods use low-rank reconstructions~\citep{nystrom,skyformer,scatterbrain,kronecker,tensorization}. These methods primarily rewrite or approximate self-attention itself; in contrast, exploiting low-rank tensor structure in the representations to reduce attention cost remains underexplored.

Modeling multidimensional data is central to scientific applications~\citep{kolda2009tensor}, from video prediction and world modeling~\citep{gao2022simvp,bruce2024genie} to weather and climate forecasting~\citep{earthformer,nguyen2023climax}. Such data exhibit interactions within and across modes that models can exploit~\citep{holmes2012turbulence,hannachi2007empirical,brunton2019data,solgi2025tensorized}. Existing architectures, however, face tradeoffs between efficiency and global token mixing. Recurrent models limit parallelization through sequential processing~\citep{shi2015convolutional,wang2017predrnn,wang2018predrnnpp,wang2019memory,chang2021mau}, while convolutional models rely on local receptive fields~\citep{gao2022simvp,tan2023temporal}. Likewise, local and factorized attention restrict direct interactions to local regions or individual axes~\citep{swin,earthformer,axial,timesformer,arnab2021vivit,predformer}. Yet multidimensional data can exhibit long-range, cross-mode interactions, motivating global token mixing~\citep{wallace1981teleconnections,wang2018nonlocal,lam2023graphcast}. Full attention provides global mixing but incurs quadratic cost and lacks structural priors~\citep{dosovitskiy2021image,dascoli2021convit}; moreover, its attention can become increasingly diffuse with sequence length~\citep{nakanishi2025scalable,velickovic2024softmax,jin2023training}. Efficient global attention~\citep{linear,linformer,nystrom} reduces computation but still lacks structural priors and can sacrifice accuracy~\citep{tay2022efficient}. We propose that the low-rank tensor structure of hidden representations provides an efficient inductive bias for global token mixing.


Transformer hidden representations exhibit approximate low-rank structure and lie on low-dimensional manifolds~\citep{yu2023lowrank,modell2025manifold}. For high-order representations, this structure can be exploited through tensor decomposition. Using Tucker factorization, we observe that hidden representations largely share a common subspace across samples: shared mode-wise factors with compact per-sample cores achieve nearly the same reconstruction accuracy as sample-specific decompositions (Appendix~\ref{sec:shared-subspace}). This observation motivates \emph{Tucker bottleneck attention (TuBA)}, which projects a multidimensional hidden tensor into a compact Tucker core using learned mode-wise read factors, applies standard multi-head self-attention (MHSA) within the core, and writes the update back to the original tokens (Figure~\ref{fig:tuba-overview}). TuBA thereby enables global token mixing at reduced computational cost.

Our main contributions are summarized below:

\begin{itemize}[leftmargin=*, itemsep=0pt, topsep=0pt, parsep=0pt, partopsep=0pt]
    \item \textbf{A high-order tensor view for efficient attention.}
    We introduce TuBA, which exploits low-rank tensor
    representations to perform global token mixing in a compact Tucker
    core at subquadratic cost. We further extend TuBA to autoregressive
    tensor generation through causal attention.

    \item \textbf{Interpretation and expressivity of TuBA.}
    We show that TuBA's learned read projection yields a unique least-squares Tucker core for associated factors, while its write projection produces a Tucker-structured update in a potentially distinct output subspace. TuBA also contains standard self-attention as a special case.
\item \textbf{Experimental results.} 
Across video and weather benchmarks, TuBA achieves favorable accuracy--efficiency trade-offs over standard and efficient attention methods and task-specific models. Compared with standard self-attention, TuBA reduces video prediction MSE by up to $24.7\%$ and FLOPs by up to $66.6\%$, while reducing autoregressive weather normalized RMSE by $37.1\%$ with a $4.27\times$ speedup. Ablations further show that intermediate Tucker ranks and multi-frame generation outperform full-rank attention and frame-by-frame generation, respectively.

\end{itemize}

\section{Background}
\label{sec:background}

\subsection{Multi-head self-attention}
\label{sec:bg-attn}

Let $\mat{X}\in\mathbb{R}^{N\times d}$ contain $N$ tokens of hidden dimension $d$.
Multi-head self-attention~\citep{vaswani2017attention} with $H$ heads of width $d_h$ is
\begin{equation}
\label{eq:attn}
\mathrm{MHSA}(\mat{X})
=\sum_{i=1}^{H}
  \underbrace{\softmax\!\Big(\tfrac{1}{\sqrt{d_h}}\,
    \mat{X}\mat{W}_Q^i\big(\mat{W}_K^i\big)^{\!\top}\mat{X}^{\!\top}\Big)}
    _{\textstyle \mat{S}_i(\mat{X})\,\in\,\mathbb{R}^{N\times N}}
  \;\mat{X}\;
  \underbrace{\mat{W}_V^i\mat{W}_O^i}
    _{\textstyle \mat{M}_i\,\in\,\mathbb{R}^{d\times d}},
\end{equation}

where $\softmax$ acts row-wise,
$\mat{W}_Q^i,\mat{W}_K^i,\mat{W}_V^i\in\mathbb{R}^{d\times d_h}$ are the learned query, key, and value projection matrices, and $\mat{W}_O^i\in\mathbb{R}^{d_h\times d}$ is the output-projection block for head $i$. Thus, each head applies the state-dependent operator $\mat{S}_i(\mat{X})$ along the token dimension and the fixed linear operator $\mat{M}_i$ along the channel dimension.

\begin{figure}[t]
    \centering
    \includegraphics[width=\textwidth]{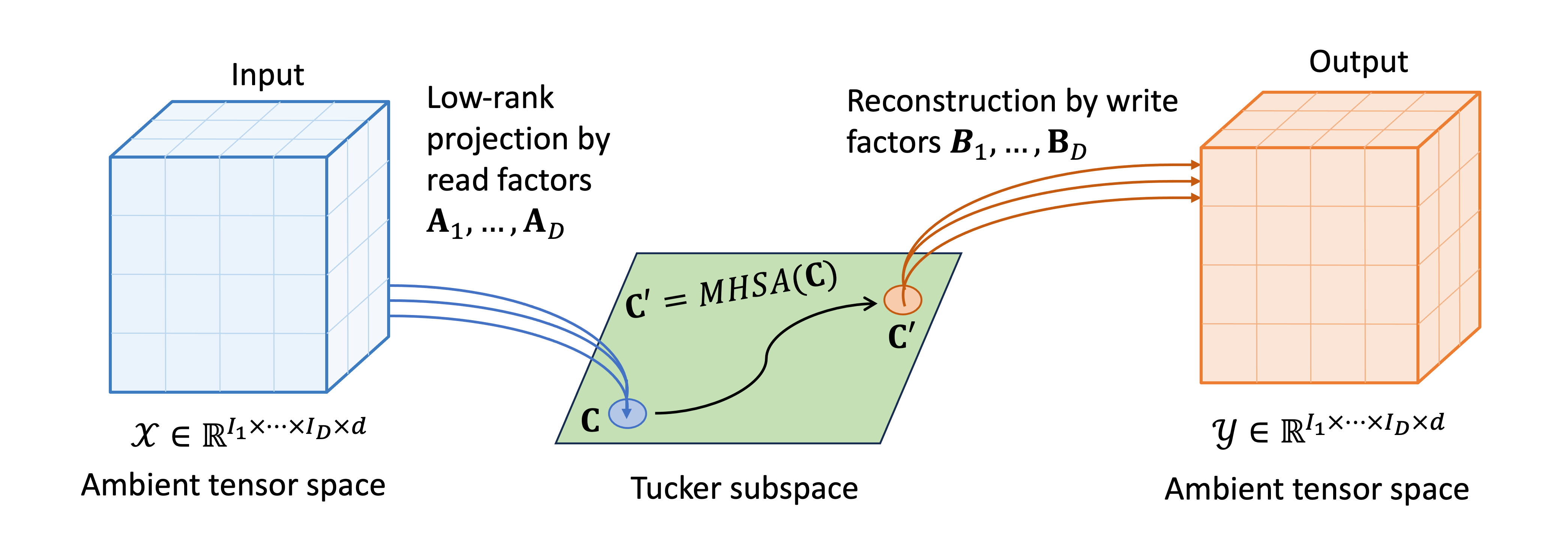}
    \caption{
    Overview of Tucker bottleneck attention. TuBA projects a tensor into a compact Tucker core, applies attention on the core, then reconstructs the output.
    }
    \label{fig:tuba-overview}
\end{figure}

\subsection{Tucker representation}
\label{sec:bg-tucker}
Let a rank-$(r_1,\dots,r_D)$ least-squares Tucker approximation~\citep{tucker1966}
of a tensor $\ten{X}\in\mathbb{R}^{I_1\times\cdots\times I_D}$ be
\begin{equation}
\label{eq:tucker}
\ten{X}\;\approx\;\hat{\ten{X}}
=
\ten{C}\times_1\mat{U}_1
\times_2\cdots\times_D\mat{U}_D,
\end{equation}
where $\ten{C}\in\mathbb{R}^{r_1\times\cdots\times r_D}$ is the core,
$\mat{U}_n\in\mathbb{R}^{I_n\times r_n}$, $n=1,\ldots,D$, are
full-column-rank factor matrices, and $\times_n$ denotes
multiplication along mode $n$.
For the given factor matrices, the core is uniquely obtained as
\begin{equation}
\label{eq:tucker-core}
\ten{C}
=
\ten{X}
\times_1\bar{\mat{U}}_1^{\top}
\times_2\cdots\times_D\bar{\mat{U}}_D^{\top},
\end{equation}
where $\bar{\mat{U}}_n^{\top}=\mat{U}_n^\dagger$ is the Moore--Penrose
pseudoinverse of $\mat{U}_n$, and $\bar{\mat{U}}_n=\mat{U}_n$ for orthonormal factors. 

\section{Method}
\label{sec:method}

We first introduce the TuBA layer and establish its connection to Tucker representations. We then examine its induced token mixing, extend it to autoregressive tensor generation, and describe its computational cost and integration into a transformer.

\subsection{Tucker Bottleneck Attention}
\label{sec:tuba-layer}

Tucker bottleneck attention (TuBA) performs global token mixing using a Tucker core rather than across all input tokens. It projects a
multidimensional hidden tensor into a compact core using learned
mode-wise factor matrices, applies standard multi-head self-attention
to the core, and maps the resulting update back to the original
tensor. Let $\ten{X}\in\mathbb{R}^{I_1\times\cdots\times I_D\times d}$ be a hidden tensor, where the first $D$ modes index the native token axes and $d$ is the hidden dimension. For example, in spatiotemporal prediction, after the inputs are tokenized, the hidden representation is $\ten{X}\in\mathbb{R}^{F\times P\times Q\times d}$ with $D=3$, where $F$ indexes frames, $P$ and $Q$ index the spatial token grid, and each token carries a $d$-dimensional feature vector.
For ranks $(r_1,\ldots,r_D)$ with $r_n\leq I_n$,
TuBA learns read and write factors
$\mat{A}_n,\mat{B}_n\in\mathbb{R}^{I_n\times r_n}$ for each token axis
$n=1,\ldots,D$. It first reads the hidden representation into a compact
Tucker core:
\begin{equation}
\label{eq:tuba-read}
\ten{C}
=
\ten{X}
\times_1\mat{A}_1^\top
\times_2\cdots
\times_D\mat{A}_D^\top
\in
\mathbb{R}^{r_1\times\cdots\times r_D\times d}.
\end{equation}
Because the projection acts mode-wise, the core is a compact representation that preserves the multidimensional structure of the input, with each mode's resolution set by its own rank. Flattening the first $D$ modes of $\ten{C}$ gives
$\mat{C}\in\mathbb{R}^{R\times d}$, where
$R=\prod_{n=1}^{D}r_n$ is the number of core slots. TuBA applies
standard multi-head self-attention to these slots:
\begin{equation}
\label{eq:tuba-attend}
\mat{C}'=\mathrm{MHSA}(\mat{C})
\in\mathbb{R}^{R\times d}.
\end{equation}
Since non-causal MHSA is permutation equivariant, and any core positional
encoding is indexed by core coordinates, the flattening order does not
affect the result; hence, reshaping $\mat{C}'$ gives $\ten{C}'\in\mathbb{R}^{r_1\times\cdots\times r_D\times d}$. The output of the TuBA layer, $\ten{Y}$, is obtained by mapping this tensor back to the ambient tensor space through the learned write factors:
\begin{equation}
\label{eq:tuba-write}
\ten{Y}
=
\ten{C}'
\times_1\mat{B}_1
\times_2\cdots
\times_D\mat{B}_D
\in\mathbb{R}^{I_1\times\cdots\times I_D\times d}.
\end{equation}

TuBA contains standard MHSA as a full-rank special case: when
$r_n=I_n$ and $\mat{A}_n=\mat{B}_n=\mat{I}$ for every $n$, it
reduces exactly to MHSA over the $N$ flattened tokens. The following proposition connects the projection to a least-squares Tucker core and characterizes the structure of the attended update.

\begin{proposition}[Interpretation of TuBA]
\label{prop:tuba-tucker}
For full-column-rank read factors $\mat{A}_n$, define
\[
\mat{U}_n
=
\mat{A}_n(\mat{A}_n^\top\mat{A}_n)^{-1}.
\]
The TuBA read core in Eq.~\eqref{eq:tuba-read} is the unique least-squares Tucker core associated with the factors $\mat{U}_1,\ldots,\mat{U}_D$. Reconstruction through these factors gives the orthogonal projection of $\ten{X}$ onto the tensor-product subspace determined by the read factors. For any transformed core, the TuBA update in Eq.~\eqref{eq:tuba-write} has an exact Tucker representation with mode-$n$ rank at most $r_n$ for each token mode.
\end{proposition}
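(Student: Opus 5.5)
The plan is to reduce everything to the pseudoinverse formula of Eq.~\eqref{eq:tucker-core} and to standard properties of mode products. The hidden dimension $d$ is carried along as an extra, untouched mode; equivalently, we use the factor $\mat{I}_d$ there. For full-column-rank $\mat{A}_n$, the Gram matrix $\mat{A}_n^\top\mat{A}_n$ is invertible, so $\mat{U}_n=\mat{A}_n(\mat{A}_n^\top\mat{A}_n)^{-1}$ is well defined. It also has full column rank, since $\mat{U}_n^\top\mat{A}_n=\mat{I}_{r_n}$. The first step is to compute its pseudoinverse:
\[
\mat{U}_n^\dagger=(\mat{U}_n^\top\mat{U}_n)^{-1}\mat{U}_n^\top .
\]
We have
\[
\mat{U}_n^\top\mat{U}_n=(\mat{A}_n^\top\mat{A}_n)^{-1}\mat{A}_n^\top\mat{A}_n(\mat{A}_n^\top\mat{A}_n)^{-1}=(\mat{A}_n^\top\mat{A}_n)^{-1},
\]
and therefore $\mat{U}_n^\dagger=(\mat{A}_n^\top\mat{A}_n)(\mat{A}_n^\top\mat{A}_n)^{-1}\mat{A}_n^\top=\mat{A}_n^\top$. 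Substituting $\bar{\mat{U}}_n^\top=\mat{A}_n^\top$ into Eq.~\eqref{eq:tucker-core} then reproduces Eq.~\eqref{eq:tuba-read} exactly, which proves the first claim.

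Uniqueness needs a short justification rather than a citation. With the mode-$(D{+}1)$ unfolding kept aside, vectorize the problem as $\min_{\vect{c}}\|\mathrm{vec}(\ten{X})-(\mat{I}_d\otimes\mat{U}_D\otimes\cdots\otimes\mat{U}_1)\vect{c}\|^2$, where the Kronecker ordering follows the column-major vec convention. A Kronecker product of full-column-rank matrices has full column rank, so the normal equations have the unique solution given by its pseudoinverse. That pseudoinverse factorizes as $\mat{I}_d\otimes\mat{U}_D^\dagger\otimes\cdots\otimes\mat{U}_1^\dagger$, which is the mode-wise formula.

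For the reconstruction claim, the plan is to compute
\[
\ten{C}\times_1\mat{U}_1\cdots\times_D\mat{U}_D=\ten{X}\times_1(\mat{U}_1\mat{A}_1^\top)\cdots\times_D(\mat{U}_D\mat{A}_D^\top),
\]
using the composition rule $(\ten{X}\times_n\mat{M})\times_n\mat{N}=\ten{X}\times_n(\mat{N}\mat{M})$. Each matrix $\mat{P}_n=\mat{A}_n(\mat{A}_n^\top\mat{A}_n)^{-1}\mat{A}_n^\top$ is the orthogonal projector onto $\mathrm{col}(\mat{A}_n)=\mathrm{col}(\mat{U}_n)$, being symmetric and idempotent. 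In vectorized form, the operator is $\mat{I}_d\otimes\mat{P}_D\otimes\cdots\otimes\mat{P}_1$, which is again symmetric and idempotent. Its range is $\mathbb{R}^d\otimes\mathrm{col}(\mat{A}_D)\otimes\cdots\otimes\mathrm{col}(\mat{A}_1)$, the tensor-product subspace determined by the read factors. Hence it is the orthogonal projection onto that subspace.

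For the last claim, take any core $\ten{C}'\in\mathbb{R}^{r_1\times\cdots\times r_D\times d}$. Eq.~\eqref{eq:tuba-write} is already in Tucker form, with core $\ten{C}'$, factors $\mat{B}_n$, and $\mat{I}_d$ on the feature mode. Its mode-$n$ unfolding factors as $\mat{B}_n\mat{C}'_{(n)}(\cdots)^\top$, so its column space lies in $\mathrm{col}(\mat{B}_n)$ and its rank is at most $r_n$. No full-rank assumption on $\mat{B}_n$ is needed here. The main obstacle is bookkeeping rather than substance: stating the Kronecker/vectorization identities with a consistent ordering, and making clear that the extra feature mode is untouched, so that the uniqueness and projection claims hold with respect to the full $(D{+}1)$-mode tensor.
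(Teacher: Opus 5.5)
Your proposal is correct and follows the paper's proof step by step. You compute $\mat{U}_n^\dagger=\mat{A}_n^\top$, get the unique least-squares core from the full-column-rank Kronecker factor and the factorization of its pseudoinverse, identify the reconstruction with the symmetric idempotent Kronecker product of the projectors $\mat{P}_n=\mat{A}_n(\mat{A}_n^\top\mat{A}_n)^{-1}\mat{A}_n^\top$, and bound the mode-$n$ rank through $\mat{Y}_{(n)}=\mat{B}_n(\cdots)$. The only cosmetic difference is that you vectorize all $D{+}1$ modes with an explicit $\mat{I}_d$ factor, whereas the paper flattens to an $N\times d$ matrix and minimizes $\|\mat{X}-\mat{U}\mat{G}\|_F^2$; the two formulations are equivalent.
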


See
Appendix~\ref{sec:tuba-tucker-proof} for proof.

\subsection{TuBA token mixing}
\label{sec:tuba-analysis}

To clarify how core attention induces interactions among the original tokens and the distinct roles of the read and write factors, we express the read, attend, and write operations as a single token-mixing map. Let $\mat{X}\in\mathbb{R}^{N\times d}$ denote the unfolding of the first $D$ modes of $\ten{X}\in\mathbb{R}^{I_1\times\cdots\times I_D\times d}$, with the mode-1 index varying fastest and $N=\prod_{n=1}^{D}I_n$. Collecting the mode-wise factors gives the Kronecker-product read and write matrices
\begin{equation}
\label{eq:tuba-kron-bases}
\mat{A}
=
\mat{A}_D\otimes\cdots\otimes\mat{A}_1
\in\mathbb{R}^{N\times R},
\qquad
\mat{B}
=
\mat{B}_D\otimes\cdots\otimes\mat{B}_1
\in\mathbb{R}^{N\times R}.
\end{equation}
Equations~\eqref{eq:tuba-read}--\eqref{eq:tuba-write} then take the matrix form
\begin{equation}
\label{eq:tuba-flat}
\tuba(\mat{X}) = \mat{B}\,\mathrm{MHSA}(\mat{A}^{\top}\mat{X}).
\end{equation}
Expanding headwise gives
\begin{equation}
\label{eq:tuba-headwise}
\tuba(\mat{X})
=
\sum_{i=1}^{H}
\mat{B}\,
\widetilde{\mat{S}}_i(\mat{A}^{\top}\mat{X})
\mat{A}^{\top}\mat{X}\mat{M}_i,
\end{equation}
where $\widetilde{\mat{S}}_i(\mat{A}^{\top}\mat{X})\in\mathbb{R}^{R\times R}$
is the attention matrix of head $i$ on the Tucker core. Thus, TuBA replaces
each head's $N\times N$ attention computation with an $R\times R$ core
attention computation, retaining the standard channel-side maps $\mat{M}_i$.
The induced token-mixing map
$\mat{B}\widetilde{\mat{S}}_i(\mat{A}^{\top}\mat{X})\mat{A}^{\top}$
has rank at most $R$, with column and row spaces contained in
$\operatorname{range}(\mat{B})$ and $\operatorname{range}(\mat{A})$,
respectively. It is generally nonsymmetric, and its input- and output-side
low-rank subspaces need not coincide. TuBA therefore learns separate read
and write factors (the untied configuration). The tied special case,
$\mat{A}_n=\mat{B}_n$ for all $n$, assumes one shared Tucker subspace
suffices for reading the input and writing the attended update. Denote the
corresponding single-layer function classes by $\mathcal{F}_{\mathrm{untied}}$
and $\mathcal{F}_{\mathrm{tied}}$, respectively.

\begin{theorem}[Expressivity of untied read and write factors]
\label{thm:expressivity}
For $N>1$ and the same Tucker ranks, hidden dimension, and heads, with
MHSA as in Eq.~\eqref{eq:attn},
$\mathcal{F}_{\mathrm{tied}}\subsetneq\mathcal{F}_{\mathrm{untied}}$.
\end{theorem}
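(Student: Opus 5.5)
Inclusion $\mathcal{F}_{\mathrm{tied}}\subseteq\mathcal{F}_{\mathrm{untied}}$ is immediate: any tied layer with parameters $(\{\mat{A}_n\},\{\mat{W}_Q^i,\mat{W}_K^i,\mat{W}_V^i,\mat{W}_O^i\})$ is the untied layer obtained by setting $\mat{B}_n=\mat{A}_n$, with all other parameters kept, so it computes the same function via Eq.~\eqref{eq:tuba-flat}. The work is strictness. I will exhibit one untied function that no tied choice can reproduce, using the structural facts stated before the theorem: every tied output $\mat{A}\,\mathrm{MHSA}(\mat{A}^\top\mat{X})$ has column space in $\operatorname{range}(\mat{A})$, and its dependence on $\mat{X}$ is only through $\mat{A}^\top\mat{X}$.

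Since $N>1$, some mode has $I_n\ge 2$. The most robust witness uses the interplay between the read kernel and the write range. Choose an untied layer with $\mat{B}\neq \mat{0}$ and a read matrix $\mat{A}$ such that $\operatorname{range}(\mat{B})\not\subseteq\operatorname{range}(\mat{A})$. A concrete instance: in a mode with $I_n\ge2$ and $r_n<I_n$ (e.g.\ $r_n=1$), take $\mat{A}_n=\vect{e}_1$, $\mat{B}_n=\vect{e}_2$; other modes get identity or any fixed full-column-rank factors. Pick attention weights with $\mat{M}_i\neq\mat{0}$ for some head, for example a single head with $\mat{W}_V\mat{W}_O$ nonzero. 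The untied map $f(\mat{X})=\mat{B}\,\mathrm{MHSA}(\mat{A}^\top\mat{X})$ is then nonzero for suitable $\mat{X}$, since softmax rows sum to one and $\mat{A}^\top\mat{X}\mat{M}$ can be made nonzero. Its outputs, however, lie in $\operatorname{range}(\mat{B})$, which is orthogonal to the direction $\vect{e}_1$ probed by the read.

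Now suppose a tied layer $g(\mat{X})=\mat{A}'\,\mathrm{MHSA}'(\mat{A}'^\top\mat{X})$ equals $f$. Two constraints give the contradiction. First, the range of $f$ over all inputs spans a subspace $V\subseteq\operatorname{range}(\mat{B})$, and $V$ must lie in $\operatorname{range}(\mat{A}')$. Second, $f$ depends on $\mat{X}$ only through the read component in $\operatorname{range}(\mat{A})$, so $g$ must be invariant under perturbations of $\mat{X}$ orthogonal to $\operatorname{range}(\mat{A})$. In particular, $g$ must be invariant to changing $\mat{X}$ along $\operatorname{range}(\mat{B})$ directions. Because $\operatorname{range}(\mat{A}')\supseteq V$, perturbing $\mat{X}$ along $V$ changes $\mat{A}'^\top\mat{X}$. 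The remaining task is to show this necessarily changes $g$.

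That last step is the main obstacle, because MHSA can be degenerate. For instance, $\mathrm{MHSA}'$ might ignore certain core coordinates. Also, softmax is shift-invariant along the all-ones direction, which can kill dependence. Two routes are available. The simpler one is to make the dimensions too small for degeneracy: take $r_n=1$ in every mode so $R=1$. Then each slot attends only to itself, the score matrix is identically $1$, and the tied function is linear: $g(\mat{X})=\mat{a}\mat{a}^\top\mat{X}\mat{M}'$ with $\mat{a}\in\mathbb{R}^N$ and $\mat{M}'=\sum_i\mat{M}_i'$. The untied function is $f(\mat{X})=\mat{b}\mat{a}_0^\top\mat{X}\mat{M}$, and choosing $\mat{b}\perp\mat{a}_0$, both nonzero, is possible because $N>1$. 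Equality for all $\mat{X}$ forces $\mat{a}\mat{a}^\top\otimes\mat{M}'=\mat{b}\mat{a}_0^\top\otimes\mat{M}$ as linear maps. So $\mat{a}\mat{a}^\top$ would be proportional to the nonsymmetric rank-one matrix $\mat{b}\mat{a}_0^\top$, which has zero trace while $\operatorname{tr}(\mat{a}\mat{a}^\top)=\|\mat{a}\|^2$. This forces $\mat{a}=0$, contradicting $f\neq0$.

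If the rank tuple is fixed arbitrarily rather than chosen by me, I would embed this rank-one construction. Set the untied $\mat{A}_n,\mat{B}_n$ so that $\mat{B}^\top\mat{A}$ has zero trace structure. The tied map $\mat{X}\mapsto\mat{A}'^\top\mathrm{MHSA}'(\mat{A}'^\top\mathcal{\cdot})$ then always has a positive-semidefinite "diagonal" coupling, whereas the untied map does not. This can be checked by differentiating at $\mat{X}=0$, where the softmax is uniform and the Jacobian reduces to $\mat{A}'\frac{1}{R}\mathbf{1}\mathbf{1}^\top\mat{A}'^\top\otimes\mat{M}'$ plus the analogous untied expression.
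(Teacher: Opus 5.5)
\textbf{Gap: the rigorous argument covers only $R=1$, and the general-rank condition is misstated.}

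Your inclusion argument is correct, and so is your $R=1$ argument. The trace comparison there, $\operatorname{tr}(\vect{a}\vect{a}^\top)=\|\vect{a}\|^2$ against $\operatorname{tr}(\vect{b}\vect{a}_0^\top)=0$, is a valid alternative to the symmetry argument the paper uses. The problem is scope. The theorem fixes a common but otherwise arbitrary rank tuple, so you may not set every $r_n=1$. The $R=1$ case, where the layer happens to be exactly linear, is a single instance.

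Your last paragraph names the right mechanism for general ranks, and it is the paper's, but the condition you impose is wrong. Your trace argument needs $\vect{a}^\top\vect{b}=\mathbf{1}_R^\top\mat{A}^\top\mat{B}\mathbf{1}_R=0$, which is the sum of all entries of $\mat{A}^\top\mat{B}$. It does not need $\operatorname{tr}(\mat{B}^\top\mat{A})=0$, and the two agree only when $R=1$. For a counterexample, take full ranks, $\mat{A}=\mat{I}$, and $\mat{B}$ a Kronecker product of permutations one of which is a cyclic shift. Then $\operatorname{tr}(\mat{B}^\top\mat{A})=0$, yet $\mat{A}\mathbf{1}_R=\mat{B}\mathbf{1}_R=\mathbf{1}_N$. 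The untied derivative at the origin, $\mat{Z}\mapsto\frac{1}{R}\mathbf{1}_N\mathbf{1}_N^\top\mat{Z}\mat{M}$, is then matched by the tied layer with $\mat{A}'=\mat{I}$ and $\mat{M}'=\mat{M}$, so no contradiction arises.

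\textbf{How to close it.} Each head is $\mat{S}_i(\mat{C})\mat{C}\mat{M}_i$ with $\mat{S}_i(\mat{0})=\frac{1}{R}\mathbf{1}_R\mathbf{1}_R^\top$. By the product rule, the derivative of an untied layer at $\mat{X}=\mat{0}$ is $\mat{Z}\mapsto\frac{1}{R}\vect{b}\vect{a}^\top\mat{Z}\mat{M}$, where $\vect{a}=\mat{A}\mathbf{1}_R$, $\vect{b}=\mat{B}\mathbf{1}_R$ and $\mat{M}=\sum_i\mat{M}_i$. For a tied layer it is $\mat{Z}\mapsto\frac{1}{R}\vect{c}\vect{c}^\top\mat{Z}\mat{M}'$ with $\vect{c}=\mat{A}'\mathbf{1}_R$.

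Since $\vect{a}=(\mat{A}_D\mathbf{1}_{r_D})\otimes\cdots\otimes(\mat{A}_1\mathbf{1}_{r_1})$, and similarly for $\vect{b}$, you can do the following at any prescribed ranks. In a mode with $I_n\ge 2$, give the read and write factors column sums $\vect{e}_1$ and $\vect{e}_2$. In all other modes, give them equal nonzero column sums. This yields $\vect{a},\vect{b}\neq\vect{0}$ with $\vect{a}^\top\vect{b}=0$. With any $\mat{M}\neq\mat{0}$, your proportionality-plus-trace argument then applies verbatim to the two derivatives.

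The paper does exactly this, with two differences. It only needs $\vect{a},\vect{b}$ noncollinear, concluding from the symmetry of $\gamma\vect{c}\vect{c}^\top$. It also zeroes $\mat{W}_Q$ and $\mat{W}_K$ to make the witness linear, which is unnecessary once you compare derivatives at the origin. The range/kernel route in your middle paragraphs can simply be dropped.
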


See Appendix~\ref{sec:tuba-expressivity-proof} for proof.

Theorem~\ref{thm:expressivity} shows that untied factors represent mappings that tied factors cannot at the same core size. Untying the factors increases expressivity without increasing TuBA's computational cost.

\subsection{Causal TuBA for Autoregressive Tensor Generation}
\label{sec:tuba-causal}

TuBA treats a tensor as a single computational unit. When a task
requires causal or, more generally, autoregressive generation, the
ordering is imposed over a sequence of tensors rather than being
inherent to any individual tensor mode. Depending on the task, the choice of tensor determines the granularity
at which causality is imposed. For example, a video clip may be processed non-causally as a single three-dimensional tensor, whereas causal generation can treat individual frames or contiguous blocks of frames as an ordered sequence of tensors. Let $\ten{X}_1,\ldots,\ten{X}_T
\in\mathbb{R}^{I_1\times\cdots\times I_D\times d}$ be a sequence
of tensor observations. TuBA independently reads each $\ten{X}_t$
into a core $\mat{C}_t\in\mathbb{R}^{R\times d}$ and applies masked attention to their concatenation
\[
\mat{C}_{1:T}
=
\begin{bmatrix}
\mat{C}_1^\top & \cdots & \mat{C}_T^\top
\end{bmatrix}^{\!\top}
\in\mathbb{R}^{TR\times d}.
\]
For core slots $a,b\in\{1,\ldots,R\}$ at times $t$ and $s$, respectively,
the causal mask is
\begin{equation}
\label{eq:tuba-block-causal-mask}
\boldsymbol{\Lambda}_{(t,a),(s,b)}
=
\begin{cases}
0, & s\leq t,\\
-\infty, & s>t.
\end{cases}
\end{equation}
The mask is added to the pre-softmax attention scores in
Eq.~\eqref{eq:attn}. The usual triangular mask becomes block lower-triangular, with $R\times R$ blocks: all core slots within a tensor may interact, while each tensor attends only to itself and earlier tensors. Block-causal masking of this form has been used over raw token grids~\citep{blockdiffusion, causvid, blockgpt}; TuBA instead imposes it on low-rank Tucker cores to autoregressively generate a sequence of tensors. This construction allows the model to jointly predict the elements of the next tensor while conditioning on preceding tensors. The tensor size thus determines the granularity of each generation step, while the Tucker ranks determine the size of the representation used for attention.

\subsection{Computational, memory, and parameter cost}
\label{sec:tuba-cost}


Tucker core attention costs $O(R^2d+Rd^2)$ FLOPs, while the mode-wise read and write contractions are bounded by $O(N\bar{R}d)$, where $\bar{R}=\sum_{n=1}^{D}r_n$. Hence, TuBA's computational complexity is $O\!\left(N\bar{R}d+R^2d+Rd^2\right)$. Since $\bar{R}\leq DR$, for fixed $D$, this complexity is asymptotically lower than the $O(N^2d+Nd^2)$ cost of standard self-attention whenever $R=o(N)$, i.e., $R/N\to0$ as $N\to\infty$. TuBA requires $O(Nd+HR^2)$ activation memory, compared with $O(Nd+HN^2)$ for standard self-attention when attention matrices are explicitly stored; with FlashAttention~\citep{dao2022flashattention}, both require $O(Nd)$, while TuBA retains its computational advantage (see Appendix~\ref{sec:tuba-cost-details}). In terms of parameters, the read and write factors add $2\sum_{n=1}^{D}I_nr_n$ per layer, which grows with the sum of the mode sizes rather than with the number of tokens $N=\prod_{n}I_n$; hence, TuBA's parameter overhead remains negligible as the multidimensional input size grows.

\subsection{TuBA Transformer block}
\label{sec:tuba-block}

TuBA replaces the multi-head self-attention sublayer of a standard
pre-norm transformer block; all remaining components are unchanged.
Given an input hidden tensor $\ten{H}$, the TuBA sublayer is
\begin{equation}
\label{eq:tuba-attention-block}
\ten{Z}
=
\ten{H}
+
\tuba\!\left(\mathrm{LN}(\ten{H})\right),
\end{equation}
where $\mathrm{LN}$ applies layer normalization along the hidden
dimension. The resulting tensor then passes through a standard
feed-forward sublayer:
\begin{equation}
\label{eq:tuba-ffn-block}
\ten{H}_{\mathrm{out}}
=
\ten{Z}
+
\mathrm{FFN}\!\left(\mathrm{LN}(\ten{Z})\right).
\end{equation}
TuBA changes only global token mixing, while normalization, residual
connections, and channel-wise feed-forward computation remain those of a standard transformer block. The Tucker bottleneck constrains the attention update, while the residual path carries the incoming representation forward at its original resolution. Consequently, the rank bounds on the TuBA update do not impose the same bounds on the hidden representations passing through the network.

\subsection{Training and initialization}
\label{sec:tuba-training}

All TuBA parameters, including the read and write factors
$\{\mat{A}_n,\mat{B}_n\}_{n=1}^{D}$, are
optimized with standard backpropagation. The mode-wise
contractions are differentiable, so TuBA requires neither alternating
tensor-factor optimization nor a specialized training objective. The Tucker ranks $(r_1,\ldots,r_D)$ are considered to be hyperparameters and determined empirically. We initialize each $\mat{A}_n$ with orthonormal columns and set
$\mat{B}_n\leftarrow\mat{A}_n$. The two factor sets are subsequently optimized independently. Thus, training begins from a shared Tucker basis while allowing the read and write subspaces to specialize to the task.

\subsection{Comparison with prior work}

Structured token interaction methods restrict attention to sparse, windowed, or axis-wise patterns~\citep{child2019sparse,longformer,bigbird,swin,axial,timesformer}, while reduction methods aggregate, select, or prune representations~\citep{wang2022pvtv2,rao2021dynamicvit,madinei2026interlace}. Their effectiveness depends on the chosen connectivity~\citep{swin,earthformer} and composition of attention operations~\citep{dong2022cswin,tarasiou2023vits,predformer}. TuBA instead exploits low-rank tensor structure, learning mode-wise factors to perform global token mixing in a compact Tucker subspace. Prior low-rank~\citep{linformer} and latent-token methods~\citep{set,luna,perceiver} compress one side of attention while retaining full-length queries, whereas other methods approximate attention scores while retaining full-length representations~\citep{nystrom, skyformer, scatterbrain}. Tensorized attention~\citep{kronecker,tensorization} similarly exploits Kronecker structure in the attention matrix without compressing hidden representations. TuBA instead compresses the hidden tensor itself, performing attention and all query, key, value, and output projections within its Tucker core. Unlike Tucker-based model compression~\citep{klein2026tuckerattention, gu2025tensorllm}, which targets model weights, TuBA compresses hidden representations along token dimensions to reduce attention and projection costs.

\section{Experiments}



We evaluate TuBA's accuracy and efficiency on video prediction and weather forecasting. On both tasks, we compare TuBA under a controlled setup with standard self-attention, efficient global attention methods---Linformer~\citep{linformer}, Nystr\"{o}m attention~\citep{nystrom}, and linear attention~\citep{linear}---and the convolutional mixer SimVP~\citep{gao2022simvp}. For video prediction, we further compare with recurrent, convolutional, local-attention, MLP, and pooling-based predictors. For weather forecasting, we compare with Earthformer~\citep{earthformer} and autoregressive FourCastNet~\citep{fourcastnet}, and evaluate causal TuBA for autoregressive forecasting. We also study the effects of Tucker core size and causal granularity on accuracy and efficiency. Bold and underlined values denote the best and second-best results, respectively. All experiments use a single NVIDIA H100 GPU, with implementation details in Appendix~\ref{sec:implement-details}.

\begin{table*}[t]
\centering
\small
\caption{Comparison of predictive accuracy and computational efficiency
on Human3.6M.}
\label{tab:human36m}
\begin{tabular}{@{}l rrrrr rrrr@{}}
\toprule
Method
  & \multicolumn{5}{c}{Accuracy}
  & \multicolumn{4}{c}{Efficiency} \\
\cmidrule(lr){2-6} \cmidrule(lr){7-10}
  & MSE$\downarrow$ & MAE$\downarrow$
  & SSIM$\uparrow$ & PSNR$\uparrow$ & LPIPS$\downarrow$
  & Params & FLOPs & FPS & FPS$^*$ \\
\midrule
TuBA
  & \textbf{113.21} & \textbf{1499.3}
  & 0.9664 & \textbf{33.79} & \underline{0.0339}
  & 11.1M & \textbf{63.5G} & \underline{580} & \underline{739} \\
SimVP
  & \underline{116.91} & \underline{1525.9}
  & \textbf{0.9824} & \underline{33.72} & 0.0340
  & 11.1M & 74.0G & \textbf{589} & 576 \\
ViT
  & 141.64 & 1642.6
  & \underline{0.9815} & 33.50 & \textbf{0.0335}
  & 11.1M & 172.8G & 151 & 694 \\
Linear
  & 149.79 & 1639.4
  & 0.9615 & 32.54 & 0.0397
  & 11.1M & \underline{70.6G} & 555 & \textbf{741} \\
Linformer
  & 277.30 & 2107.8
  & 0.9501 & 30.02 & 0.0406
  & 23.7M & 76.2G & 552 & 701 \\
Nystr\"{o}m
  & 196.01 & 1809.8
  & 0.9575 & 31.45 & 0.0439
  & 11.1M & 75.2G & 473 & 537 \\
\bottomrule
\end{tabular}
\end{table*}

\begin{table*}[t]
\centering
\small
\caption{Comparison of predictive accuracy and computational efficiency
on KittiCaltech.}
\label{tab:kitticaltech}
\begin{tabular}{@{}l rrrrr rrrr@{}}
\toprule
Method
  & \multicolumn{5}{c}{Accuracy}
  & \multicolumn{4}{c}{Efficiency} \\
\cmidrule(lr){2-6} \cmidrule(lr){7-10}
  & MSE$\downarrow$ & MAE$\downarrow$
  & SSIM$\uparrow$ & PSNR$\uparrow$ & LPIPS$\downarrow$
  & Params & FLOPs & FPS & FPS$^*$ \\
\midrule
TuBA
  & \textbf{156.07} & \textbf{1641.7}
  & \underline{0.8963} & \textbf{27.25} & \underline{0.0772}
  & 7.0M & 29.2G & 142 & 382 \\
SimVP
  & \underline{156.94} & \underline{1683.9}
  & \textbf{0.9019} & \underline{26.87} & \textbf{0.0676}
  & 8.6M & 60.6G & 218 & 197 \\
ViT
  & 207.33 & 1970.4
  & 0.8621 & 25.85 & 0.1159
  & 7.0M & 87.5G & 62 & \underline{392} \\
Linear
  & 174.18 & 1732.0
  & 0.8782 & 26.70 & 0.1026
  & 7.0M & \textbf{25.2G} & \textbf{293} & \textbf{417} \\
Linformer
  & 285.74 & 2303.4
  & 0.8337 & 24.29 & 0.1346
  & 16.8M & 29.6G & \underline{287} & 381 \\
Nystr\"{o}m
  & 206.80 & 1972.4
  & 0.8672 & 25.87 & 0.1047
  & 7.0M & \underline{29.0G}
  & 216 & 185 \\
\bottomrule
\end{tabular}
\end{table*}

\subsection{Video Prediction}
\label{sec:exp-video}


For video prediction, we use two complementary benchmarks. Human3.6M captures structured human motion, where models predict four future frames from four observed frames at $256{\times}256$ resolution. KittiCaltech evaluates cross-dataset generalization to real-world driving scenes with camera motion, occlusions, and clutter: models trained on KITTI predict one future frame from ten observed frames on Caltech Pedestrian at $128{\times}160$ resolution. For controlled comparisons among attention mechanisms, all variants use the same shallow convolutional encoder--decoder and MLP, differing only in their attention blocks. SimVP~\citep{gao2022simvp} uses the same encoder--decoder but replaces the transformer with its original Inception-UNet (IncepU) mixer. This setup isolates the effect of the token-mixing mechanism while keeping the remaining architecture fixed.

\paragraph{Prediction accuracy and computational efficiency.}
Tables~\ref{tab:human36m} and~\ref{tab:kitticaltech} show that TuBA achieves the lowest MSE and MAE on both benchmarks, where FPS and FPS$^*$ denote \texttt{fp32} and mixed-precision inference, respectively. It also achieves the highest PSNR and second-best LPIPS. Compared with ViT using standard self-attention, TuBA reduces MSE by up to $24.7\%$ and FLOPs by up to $66.6\%$, with $2.3$--$3.8\times$ faster \texttt{fp32} inference and $3.3$--$4.3\times$ faster training (Appendix~\ref{sec:memory-walltime}), while achieving comparable mixed-precision throughput. Under mixed precision, the gap narrows as ViT uses an optimized attention kernel, while TuBA's additional read and write operations lack specialized kernels. The MSE reduction is larger on KittiCaltech ($24.7\%$ versus $20.1\%$), demonstrating effectiveness in real-world scenes with camera motion and clutter. TuBA uses no more parameters than ViT or Linformer and reduces MSE over efficient-attention baselines by $10$--$59\%$ at comparable or lower FLOPs. It also slightly outperforms SimVP in MSE with fewer FLOPs and $1.3$--$1.9\times$ higher mixed-precision throughput. Despite attending over fewer core slots than ViT has tokens, TuBA achieves higher accuracy, suggesting that its low-rank tensor bottleneck provides a beneficial inductive bias rather than merely a lossy approximation.

\begin{figure}[t]
    \centering
    \includegraphics[width=\textwidth]{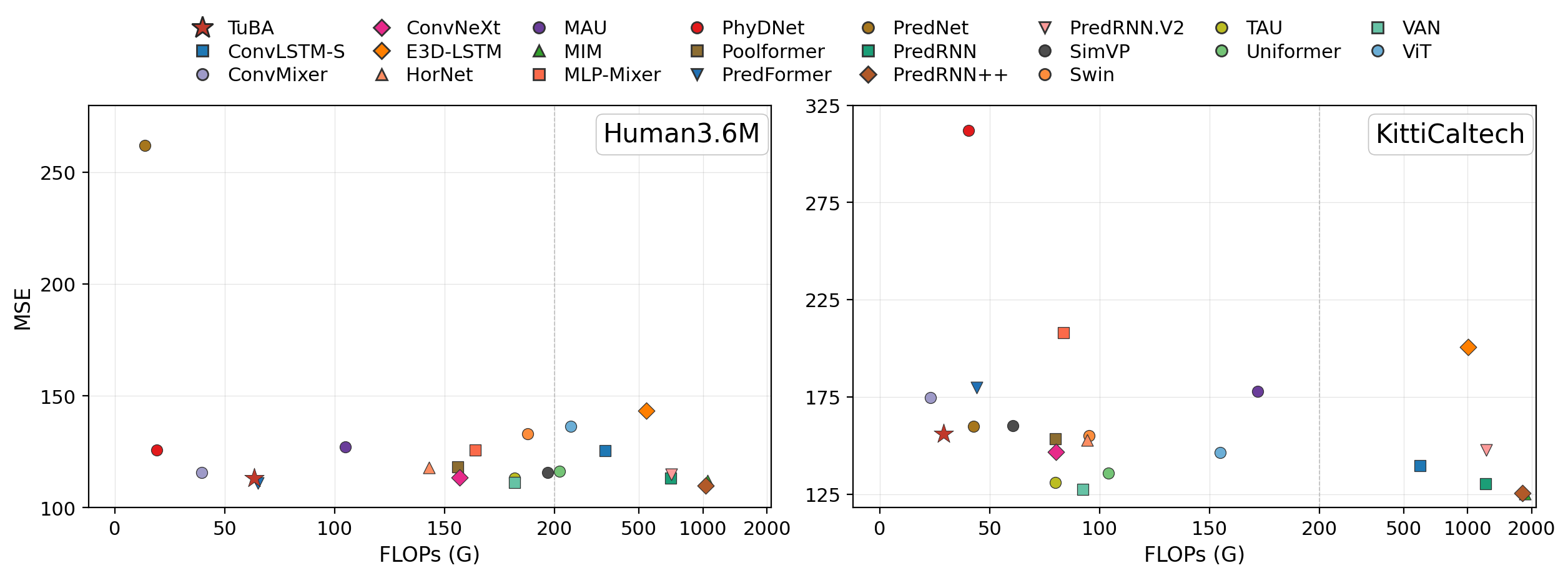}
    \vspace{-15pt}
    \caption{Comparison of prediction error (MSE) and computational cost (FLOPs) on Human3.6M and KittiCaltech; the FLOPs axis is compressed above 200G. TuBA lies on the FLOPs--MSE Pareto frontier on both benchmarks.}
    \label{fig:flops-mse}
\end{figure}

\paragraph{Comparison to other predictors.}
Figure~\ref{fig:flops-mse} compares TuBA with other video prediction methods in terms of prediction accuracy and computational demand. Since TuBA is trained with the standard MSE loss, we use MSE as the primary accuracy metric and FLOPs as a representative measure of computational cost. On both Human3.6M and KittiCaltech, TuBA lies on the FLOPs--MSE Pareto frontier: none of the compared methods achieves both lower MSE and lower FLOPs. TuBA therefore provides a favorable accuracy--efficiency trade-off, attaining low prediction error without the substantially larger computational cost of many recurrent and transformer-based predictors. These results provide broader context rather than a strictly controlled comparison, as methods differ in their training losses, architectures, skip connections, and input-processing pipelines.

\paragraph{Full-rank attention is not optimal.}
\begin{wrapfigure}{R}{0.5\textwidth}
    \centering
    \vspace{-8pt}
    \includegraphics[width=0.5\textwidth]{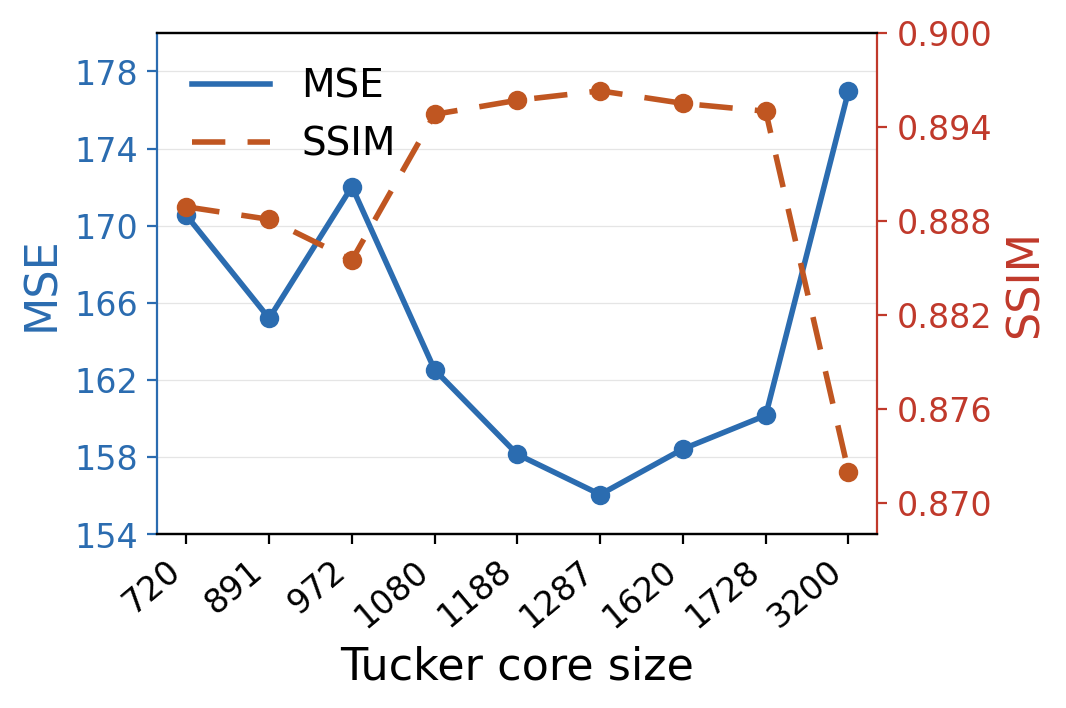}
    \caption{Effect of Tucker core size on TuBA's prediction accuracy on KittiCaltech, measured by MSE and SSIM.}
    \label{fig:rank-abl}
\end{wrapfigure}
Figure~\ref{fig:rank-abl} shows the effect of Tucker rank on model accuracy by varying the Tucker core sizes from $R=720$ to the full-rank setting ($R=3200$), which matches the number of tokens used by the standard self-attention (Appendix~\ref{sec:rank-ablation}).
We observe that accuracy does not improve monotonically as the core size increases, despite the additional computation. The best results occur at the intermediate core size $R=1287$, which attains the lowest MSE and the highest SSIM. Increasing the core beyond this point degrades prediction accuracy, and the full-rank core performs substantially worse. We also observe the same pattern comparing the standard ViT with TuBA in Tables~\ref{tab:human36m} and~\ref{tab:kitticaltech}. The fact that performance is maximized at an intermediate low rank rather than in the full-rank or standard self-attention
supports TuBA's premise that global multidimensional interactions can be modeled effectively in a compact low-rank tensor representation.

\subsection{Weather Prediction}
\label{sec:exp-weather}
We further evaluate TuBA on global weather forecasting to test its ability to model multivariate atmospheric dynamics. We use ERA5~\citep{hersbach2020era5} at $1.40625^\circ$ resolution ($128{\times}256$) with 25 atmospheric and surface variables, including geopotential, temperature, and wind at multiple pressure levels, sampled at six-hour intervals. Models observe 12 frames (three days) and predict the next 12 frames, either in one shot or autoregressively; we further evaluate autoregressive models over horizons up to 10 days (40 frames). For TuBA and the attention baselines, we use direct spatial patch embedding and a linear output head, isolating the ability to learn atmospheric dynamics without convolutional feature extraction. Autoregressive TuBA uses the block-causal formulation of Section~\ref{sec:tuba-causal}. We also compare with SimVP and two Earth-system forecasting models: Earthformer~\citep{earthformer}, a one-shot predictor combining a hierarchical convolutional encoder--decoder with cuboid attention, and FourCastNet~\citep{fourcastnet}, an autoregressive model that uses adaptive Fourier neural operators (AFNO) for spatial mixing.

\begin{wrapfigure}{R}{0.5\textwidth}
    \centering
    \vspace{-8pt}
    \includegraphics[width=0.48\textwidth]{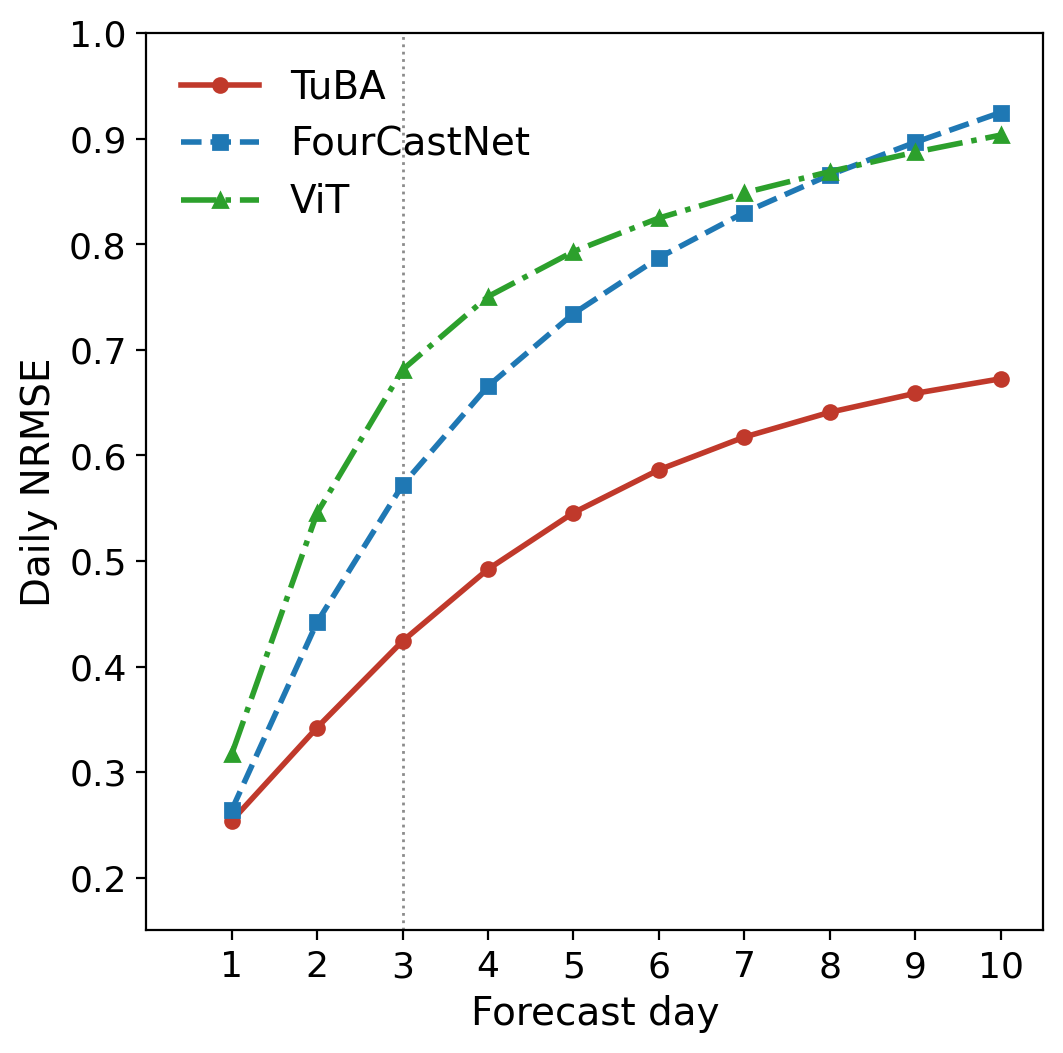}
    \caption{Daily NRMSE of autoregressive forecasts over ten days.
    The dotted line marks the three-day forecast horizon used
    during training.}
    \label{fig:era5-long-horizon}
\end{wrapfigure}

\begin{table*}[t]
\centering
\small
\caption{Comparison of predictive accuracy and computational efficiency
on ERA5. Models are trained on 2011--2015 and evaluated on 2017--2018.
(AR) denotes autoregressive generation.}
\label{tab:era5}
\begin{tabular}{@{}l r rrr rr rr r@{}}
\toprule
Method
& NRMSE$\downarrow$
& z\textsubscript{500}$\downarrow$
& t\textsubscript{850}$\downarrow$
& t\textsubscript{2m}$\downarrow$
& u\textsubscript{10}$\downarrow$
& v\textsubscript{10}$\downarrow$
& Params
& FLOPs
& clips/s \\
\midrule
TuBA 
  & 0.399
  & 0.141 & 0.162 & 0.126
  & 0.536 & 0.627
  & 8.72M & \textbf{40.8G}
  & \textbf{600} \\
SimVP
  & \underline{0.373}
  & \underline{0.127} & \underline{0.158} & \underline{0.122}
  & \underline{0.484} & \underline{0.572}
  & 8.60M & 67.8G
  & 192 \\
Earthformer
  & 0.415
  & 0.148 & 0.182 & 0.194
  & 0.526 & 0.613
  & 8.66M & 86.5G
  & 191 \\
ViT
  & 0.492
  & 0.212 & 0.204 & 0.147
  & 0.671 & 0.793
  & 8.71M & 198G
  & 362 \\
Linformer 
  & 0.516
  & 0.216 & 0.212 & 0.147
  & 0.720 & 0.866
  & 33.87M & 56.7G
  & 434 \\
Linear
  & 0.550
  & 0.259 & 0.242 & 0.169
  & 0.731 & 0.879
  & 8.71M & \underline{45.4G}
  & \underline{514} \\
Nystr\"{o}m 
  & 0.552
  & 0.259 & 0.243 & 0.172
  & 0.731 & 0.883
  & 8.71M & 48.0G
  & 401 \\
\midrule
TuBA (AR) 
  & \textbf{0.349}
  & \textbf{0.113} & \textbf{0.136} & \textbf{0.107}
  & \textbf{0.458} & \textbf{0.534}
  & 7.67M & 67.2G
  & 265 \\
FourCastNet (AR)
  & 0.442
  & 0.153 & 0.167 & 0.123
  & 0.559 & 0.659
  & 8.80M & 54.4G
  & 64 \\
ViT (AR)
  & 0.555
  & 0.203 & 0.203 & 0.184
  & 0.652 & 0.767
  & 7.26M & 451.2G
  & 62 \\
\bottomrule
\end{tabular}
\vspace{-10pt}
\end{table*}


\paragraph{Prediction accuracy and computational efficiency.}
Table~\ref{tab:era5} reports the mean normalized RMSE (NRMSE) across all 25 channels and
individual NRMSE values for five key weather fields. One-shot TuBA reduces the mean NRMSE of the standard ViT by $18.9\%$ and that of the efficient-attention baselines by $22.7$--$27.7\%$, with improvements across all five individually reported fields. As in video prediction, the improvement over the standard self-attention supports the benefit of TuBA's structured low-rank inductive bias for learning spatiotemporal dynamics. These accuracy gains accompany the lowest FLOPs and highest forecast throughput among all methods in Table~\ref{tab:era5}. TuBA requires $79.4\%$ fewer FLOPs than ViT while delivering $1.66\times$ higher throughput. It also uses fewer FLOPs and achieves higher throughput than all three efficient attention baselines. Compared with Earthformer, TuBA achieves lower mean NRMSE, although Earthformer is slightly more accurate on the two surface wind components. TuBA nevertheless delivers $3.14\times$ higher throughput
with $52.8\%$ fewer FLOPs. One-shot TuBA has a $7.0\%$ higher mean NRMSE than SimVP, but requires $39.8\%$ fewer FLOPs and delivers $3.13\times$ higher throughput; the autoregressive formulation below closes this gap and surpasses SimVP. By contrast, the efficient-attention baselines, whose throughput is closest to TuBA's, have $38.3$--$48.0\%$ higher mean NRMSE than SimVP. Even in its one-shot setting, TuBA outperforms autoregressive FourCastNet, achieving $9.7\%$ lower mean NRMSE and $9.38\times$ higher throughput.

\paragraph{Autoregressive TuBA.}

We additionally evaluate the causal formulation of TuBA from
Section~\ref{sec:tuba-causal} for autoregressive forecasting.
Each tensor contains four consecutive frames; attention is
bidirectional within its Tucker core and causal across tensor cores. Autoregressive TuBA achieves the lowest mean NRMSE and the lowest error on every individual field reported in Table~\ref{tab:era5}. It surpasses both SimVP and Earthformer in accuracy while requiring fewer FLOPs and achieving higher throughput, and reduces mean NRMSE by $12.5\%$ relative to one-shot TuBA. Compared with autoregressive ViT, it achieves $37.1\%$ lower mean NRMSE with $85.1\%$ fewer FLOPs and $4.27\times$ higher throughput. Compared with FourCastNet, autoregressive TuBA reduces mean NRMSE by $21.0\%$ and achieves $4.14\times$ higher throughput despite requiring more FLOPs. This throughput advantage reflects both TuBA's fewer autoregressive steps and differences in hardware
efficiency between attention and AFNO's Fourier-domain operations. We further extend autoregressive forecasting to ten days. Figure~\ref{fig:era5-long-horizon} shows that TuBA maintains lower daily NRMSE than both FourCastNet and ViT throughout the forecast period. Its advantage persists beyond the three-day training horizon, with a substantial separation from both baselines through day ten. These results demonstrate autoregressive TuBA's accuracy and efficiency across short- and medium-range weather forecast horizons.



\paragraph{Frame-by-frame generation is not optimal.}
We study how the granularity of causality affects autoregressive TuBA by varying the number of frames in each tensor. As shown in Table~\ref{tab:gen-g}, generating multiple frames jointly improves accuracy over frame-by-frame generation. The lowest NRMSE occurs with four-frame tensors; increasing the tensor size to six frames further improves computational efficiency, but lowers accuracy. Across the tested granularities, the parameter count increases modestly with tensor size. These results show that the choice of tensor size controls the trade-off between forecasting accuracy and computational efficiency, with the best accuracy achieved at an intermediate granularity. 

\begin{wraptable}{R}{0.5\textwidth}
\centering
\small
\vspace{-\intextsep}
\caption{Effect of causal granularity (frames per tensor) on
autoregressive TuBA's accuracy and computational efficiency on ERA5.}
\label{tab:gen-g}
\begin{tabular}{@{}c c cc c@{}}
\toprule
Granularity
  & NRMSE$\downarrow$
  & Params & FLOPs
  & clips/s\\
\midrule
1 & 0.426
  & \textbf{7.27M} & 89.3G
  & 100 \\
2 & \underline{0.358}
  & \underline{7.41M} & 85.2G
  & 163 \\
4 & \textbf{0.349}
  & 7.67M & \underline{67.2G}
  & \underline{265} \\
6 & 0.368
  & 7.93M & \textbf{62.9G}
  & \textbf{317} \\
\bottomrule
\end{tabular}
\end{wraptable}
The ablation also offers a possible explanation for part of the accuracy gap between TuBA and FourCastNet. Under frame-by-frame generation, TuBA's NRMSE rises to $0.426$, approaching FourCastNet's $0.442$, compared with $0.349$ when generating four frames jointly. This suggests that TuBA's ability to combine bidirectional interactions within each tensor with causal attention across tensors contributes to its accuracy advantage, while other architectural differences should also be considered. Yet even at single-frame granularity, autoregressive TuBA substantially outperforms autoregressive ViT with standard
self-attention,
suggesting that Tucker core attention also contributes to its higher accuracy.

\section{Conclusion and Future Work}
We introduced Tucker bottleneck attention (TuBA), which exploits the low-rank tensor structure of hidden representations to perform global token mixing within a learned Tucker subspace. By computing standard self-attention and its linear projections entirely within compact cores, TuBA reduces computational cost. Our analysis establishes its Tucker interpretation, recovers standard self-attention as a special case, and shows the increased expressivity of independently learned read and write factors. Its autoregressive extension combines bidirectional interactions within each core with causal attention across cores. Experiments on video prediction and global weather forecasting demonstrate improved prediction accuracy and reduced computation relative to standard self-attention, with favorable accuracy--efficiency trade-offs against efficient-attention alternatives and task-specific models. The results further show that low-rank Tucker cores and multi-frame generation can outperform full-rank attention and frame-by-frame generation, respectively. These findings support low-rank tensor structure as a useful inductive bias for sequence modeling and motivate further exploration of attention within learned representation subspaces. Currently, Tucker ranks are selected empirically, and our evaluation focuses on data with native multidimensional structure. Future work could investigate adaptive rank selection and extend TuBA to long one-dimensional sequences, where the choice of tensorization introduces an additional modeling decision.





\bibliography{iclr2027_conference}
\bibliographystyle{iclr2027_conference}
\clearpage

\appendix

\section{Shared Tucker subspaces of hidden representations}
\label{sec:shared-subspace}

We examine whether the hidden representations of a standard transformer admit a low-rank Tucker structure whose factor matrices are shared across samples. We extract hidden tensors $\ten{X}\in\mathbb{R}^{I_1\times\cdots\times I_D\times d}$ across the layers of the trained ViT baselines on Human3.6M and KittiCaltech and compare three Tucker representations. In all three, each sample has its own core, computed by projection onto the factors; they differ only in how the mode-wise factors are obtained. With shared factors, a single set of factors is fitted once to $128$ training samples and applied unchanged to $128$ held-out test samples. With per-sample factors, both the factors and the core are fitted to each test sample individually by HOSVD~\citep{delathauwer2000hosvd}, which serves as a reference for how well each sample can be represented at the given ranks. Random orthonormal factors of the same ranks serve as a control. We measure the relative reconstruction error $\|\ten{X}-\hat{\ten{X}}\|_F/\|\ten{X}\|_F$, averaged over the test samples, as a function of the core size $R/N$.

Figure~\ref{fig:shared-subspace} shows the results for three representative layers (0, 5, and 8), with dotted lines marking the core sizes used by TuBA in the main experiments; random factors are included as a control. Across layers, datasets, and core sizes, shared factors with per-sample cores closely approach the reconstruction accuracy of per-sample factors, with a slightly larger gap on KittiCaltech, whose driving scenes are more diverse. Thus, the underlying Tucker subspace is largely shared across samples: fixed mode-wise factors learned from training data capture most of the structure of unseen samples, while the sample-specific information is carried by a compact core. This observation motivates TuBA's design, which learns read and write factors shared across inputs and applies attention to the input-dependent core.

\begin{figure}[h]
    \centering
\includegraphics[width=\textwidth]{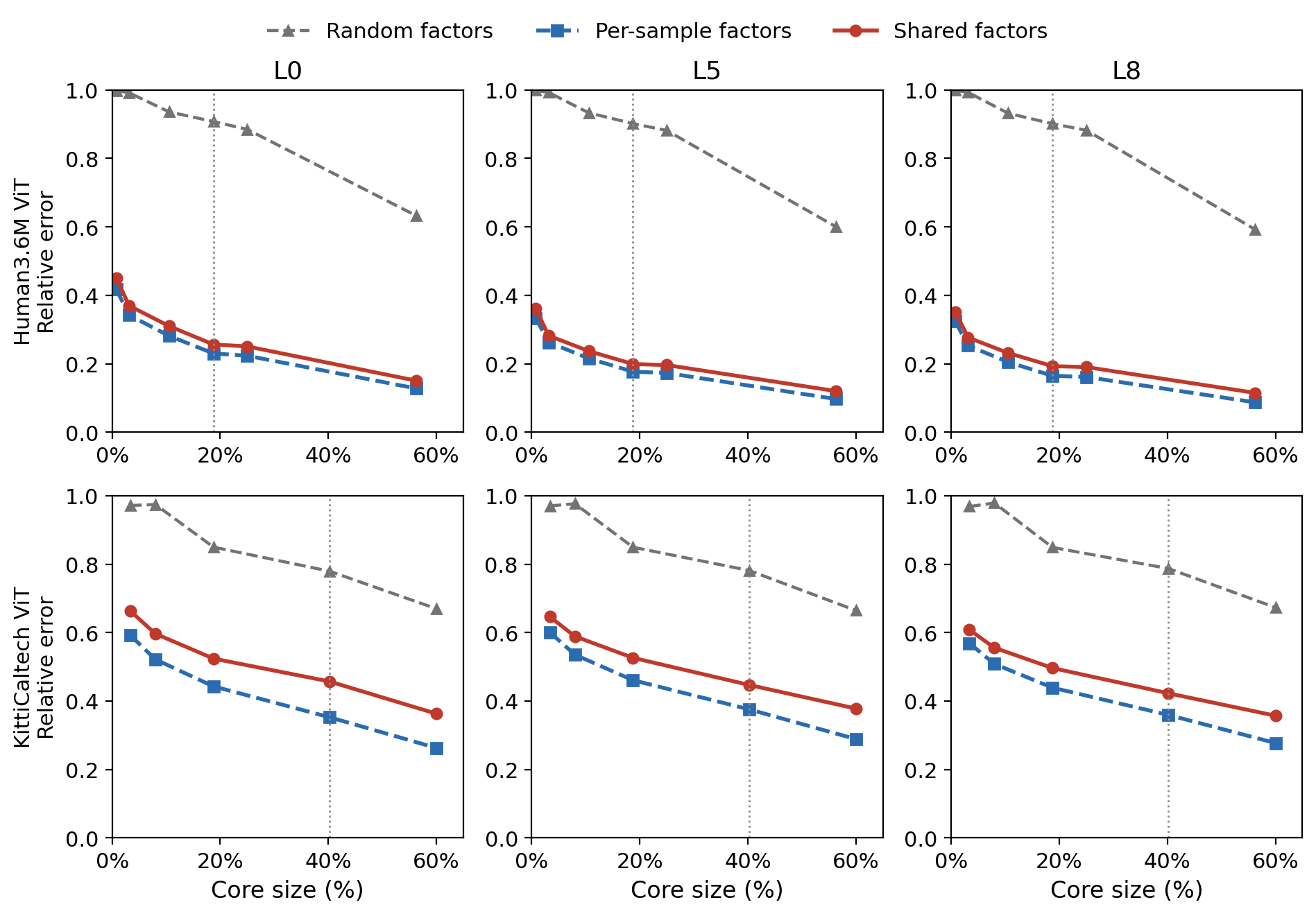}
    \caption{Tucker reconstruction error of held-out ViT hidden representations on Human3.6M (top) and KittiCaltech (bottom) at three representative layers. Shared factors with per-sample cores closely approach per-sample factors; dotted lines mark TuBA's core sizes. Random factors are included as a control.}
    \label{fig:shared-subspace}
\end{figure}

\section{Proof of Proposition~\ref{prop:tuba-tucker}}
\label{sec:tuba-tucker-proof}

\begin{proof}
For each token mode $n$, define
$\mat{Q}_n=\mat{A}_n^\top\mat{A}_n$.
Since $\mat{A}_n$ has full column rank, $\mat{Q}_n$ is
positive definite and
$\mat{U}_n=\mat{A}_n\mat{Q}_n^{-1}$ also has full column rank.
Consequently,
\[
\mat{U}_n^\top\mat{U}_n=\mat{Q}_n^{-1},
\qquad
\mat{U}_n^\dagger
=
(\mat{U}_n^\top\mat{U}_n)^{-1}\mat{U}_n^\top
=
\mat{A}_n^\top.
\]

To establish the least-squares statement, consider
\[
\underset{
\ten{G}\in\mathbb{R}^{r_1\times\cdots\times r_D\times d}
}{\operatorname{min}}
\left\|
\ten{X}
-
\ten{G}\times_1\mat{U}_1
\times_2\cdots\times_D\mat{U}_D
\right\|_F^2.
\]
Flatten the first $D$ modes with the mode-1 index varying
fastest, and let
$\mat{X}\in\mathbb{R}^{N\times d}$ and
$\mat{G}\in\mathbb{R}^{R\times d}$
be the resulting matrices, where
$N=\prod_{n=1}^D I_n$ and $R=\prod_{n=1}^D r_n$.
Define
\[
\mat{U}=\mat{U}_D\otimes\cdots\otimes\mat{U}_1,
\qquad
\mat{A}=\mat{A}_D\otimes\cdots\otimes\mat{A}_1.
\]
Reshaping preserves the Frobenius norm, so the objective
becomes $\|\mat{X}-\mat{U}\mat{G}\|_F^2$. Its unique
minimizer is
\[
\mat{G}_*
=
\mat{U}^\dagger\mat{X}
=
\left(
\mat{U}_D^\dagger\otimes\cdots\otimes\mat{U}_1^\dagger
\right)\mat{X}
=
\mat{A}^\top\mat{X}.
\]
Reshaping this matrix gives
\[
\ten{G}_*
=
\ten{X}\times_1\mat{A}_1^\top
\times_2\cdots\times_D\mat{A}_D^\top
=
\ten{C}.
\]

For the reconstruction, define
\[
\mat{P}_n
=
\mat{U}_n\mat{A}_n^\top
=
\mat{A}_n
(\mat{A}_n^\top\mat{A}_n)^{-1}
\mat{A}_n^\top.
\]
Each $\mat{P}_n$ is the orthogonal projector onto
$\operatorname{range}(\mat{A}_n)$.
Substituting the read core into the reconstruction yields
\[
\hat{\ten{X}}
=
\ten{C}\times_1\mat{U}_1
\times_2\cdots\times_D\mat{U}_D
=
\ten{X}\times_1\mat{P}_1
\times_2\cdots\times_D\mat{P}_D.
\]
In flattened form,
\[
\hat{\mat{X}}
=
\left(\mat{P}_D\otimes\cdots\otimes\mat{P}_1\right)
\mat{X}.
\]
The Kronecker product of these projectors is symmetric
and idempotent, and its range is the tensor-product
subspace determined by the read factors.
Hence, this reconstruction is the orthogonal projection
onto that subspace, with the channel mode unchanged.

Finally, for any transformed core $\ten{C}'$, the output
\[
\ten{Y}
=
\ten{C}'\times_1\mat{B}_1
\times_2\cdots\times_D\mat{B}_D
\]
is an exact Tucker representation, with the identity
factor on the channel mode understood.
For each token mode $n$, its mode-$n$ unfolding has the form
\[
\mat{Y}_{(n)}=\mat{B}_n\mat{Z}_n,
\]
where $\mat{Z}_n$ collects the transformed core and the
contractions along the remaining modes. Therefore,
\[
\operatorname{rank}(\mat{Y}_{(n)})
\leq \operatorname{rank}(\mat{B}_n)
\leq r_n,
\qquad n=1,\ldots,D.
\]
This final claim does not require full-column-rank
write factors.
\end{proof}

\section{Proof of Theorem~\ref{thm:expressivity}}
\label{sec:tuba-expressivity-proof}

\begin{proof}
Every tied layer is an untied layer with
$\mat{A}_n=\mat{B}_n$ for all $n$, so
$\mathcal{F}_{\mathrm{tied}}
\subseteq\mathcal{F}_{\mathrm{untied}}$.
We prove strict inclusion by constructing an untied
mapping that no tied layer can represent.

Consider a TuBA layer
\[
F(\mat{X})
=
\mat{B}\,\mathrm{MHSA}(\mat{A}^\top\mat{X}),
\]
with MHSA as in Eq.~\eqref{eq:attn}.
For a fixed input direction $\mat{Z}$ and scalar $t$,
the query--key scores at input $t\mat{Z}$ are of order
$t^2$. Hence, each core attention matrix satisfies
\[
\widetilde{\mat{S}}_i(t\mat{A}^\top\mat{Z})
=
\frac{1}{R}\vect{1}_R\vect{1}_R^\top+O(t^2).
\]
Define
\[
\mat{M}=\sum_{i=1}^H\mat{M}_i,
\qquad
\vect{a}=\mat{A}\vect{1}_R,
\qquad
\vect{b}=\mat{B}\vect{1}_R.
\]
The layer therefore satisfies
\[
F(t\mat{Z})
=
\frac{t}{R}\vect{b}\vect{a}^\top\mat{Z}\mat{M}
+O(t^3).
\]
For a tied layer, let $\mat{T}$ denote its common
Kronecker-product read and write matrix, and set
$\vect{c}=\mat{T}\vect{1}_R$.
Its derivative at the origin has the form
\[
DF_{\mathrm{tied}}(\mat{0})[\mat{Z}]
=
\frac{1}{R}\vect{c}\vect{c}^\top
\mat{Z}\mat{M}_{\mathrm{tied}}.
\]

Now choose untied factors such that
$\vect{a}$ and $\vect{b}$ are nonzero and noncollinear.
Such a choice is possible because $N>1$ implies
$I_n>1$ for at least one mode, and
\[
\vect{a}
=
(\mat{A}_D\vect{1}_{r_D})\otimes\cdots\otimes
(\mat{A}_1\vect{1}_{r_1}),
\qquad
\vect{b}
=
(\mat{B}_D\vect{1}_{r_D})\otimes\cdots\otimes
(\mat{B}_1\vect{1}_{r_1}).
\]
The column sums can be chosen nonzero and equal in all
other modes, and noncollinear in a mode with $I_n>1$.

Set the query and key projections to zero, making
every attention matrix exactly
$\vect{1}_R\vect{1}_R^\top/R$.
Choose the value and output projections so that
\[
\mat{M}=\mat{E}:=\vect{e}_1\vect{e}_1^\top,
\]
where $\vect{e}_1\in\mathbb{R}^d$ is the first coordinate
vector. This rank-one channel map is realizable by one
head, with all other heads contributing zero.
The resulting untied layer is the linear map
\[
F_{\mathrm{untied}}(\mat{X})
=
\frac{1}{R}\vect{b}\vect{a}^\top\mat{X}\mat{E}.
\]
Since this map is linear, its derivative is
\[
DF_{\mathrm{untied}}(\mat{0})[\mat{Z}]
=
\frac{1}{R}\vect{b}\vect{a}^\top\mat{Z}\mat{E}.
\]
Suppose a tied layer represents this same mapping.
Their derivatives at the origin must agree.
Applying them to
$\mat{Z}=\vect{x}\vect{e}_1^\top$ and taking the first
output channel gives
\[
\vect{b}\vect{a}^\top\vect{x}
=
\gamma\vect{c}\vect{c}^\top\vect{x}
\quad\text{for every }\vect{x}\in\mathbb{R}^N,
\qquad
\gamma
=
\vect{e}_1^\top\mat{M}_{\mathrm{tied}}\vect{e}_1.
\]
Thus,
\[
\vect{b}\vect{a}^\top
=
\gamma\vect{c}\vect{c}^\top.
\]
$\vect{b}\vect{a}^\top=\gamma\vect{c}\vect{c}^\top$ is symmetric, contradicting the noncollinearity of $\vect{a}$ and $\vect{b}$. Hence, $\mathcal{F}_{\mathrm{tied}}\subsetneq\mathcal{F}_{\mathrm{untied}}$.
\end{proof}

\section{TuBA computational cost details}
\label{sec:tuba-cost-details}

Consider a read contraction order $\pi=(\pi_1,\ldots,\pi_D)$. Before the
$k$th contraction, modes $\pi_1,\ldots,\pi_{k-1}$ have been reduced to
their ranks, while the remaining modes retain their ambient sizes.
Multiplication by $\mat{A}_{\pi_k}^{\top}$ therefore costs
\[
d\,I_{\pi_k}r_{\pi_k}\Big(\prod_{j<k}r_{\pi_j}\Big)\Big(\prod_{j>k}I_{\pi_j}\Big)
=
dN\,r_{\pi_k}\prod_{j<k}\frac{r_{\pi_j}}{I_{\pi_j}}
\;\leq\; dN\,r_{\pi_k},
\]
since $r_n\leq I_n$. Summing over modes, the read costs at most
$N\bar{R}d$ with $\bar{R}=\sum_{n=1}^{D}r_n$, for any contraction order.
The write applies the factors $\mat{B}_{\pi_k}$ in reverse order at the
same cost, so the read and write together cost $O(N\bar{R}d)$.
Contracting modes with smaller $r_n/I_n$ first further reduces the
exact cost.

Since $r_n\leq R$, we have $\bar{R}\leq DR$. Hence, for fixed $D$ and
$R=o(N)$, $N\bar{R}d=o(N^2d)$, $R^2d=o(N^2d)$, and $Rd^2=o(Nd^2)$, so
the TuBA cost $O(N\bar{R}d+R^2d+Rd^2)$ is asymptotically lower than
the $O(N^2d+Nd^2)$ cost of full attention.

For memory, every intermediate tensor in the read and write has at most $Nd$ entries, since each contraction only shrinks the tensor and each expansion recovers at most the ambient size; TuBA therefore requires $O(Nd+HR^2)$ activation memory, and $O(Nd)$ with FlashAttention.

\section{Implementation Details}
\label{sec:implement-details}
\subsection{Video-prediction implementation details}
\label{sec:video-details}

\paragraph{Tasks and metrics.}
Human3.6M uses $4{\to}4$ prediction at $256{\times}256{\times}3$, and KittiCaltech uses $10{\to}1$ prediction at $128{\times}160{\times}3$~\citep{tan2023openstl}; KittiCaltech models are trained on KITTI and evaluated on the $3{,}631$-clip Caltech Pedestrian test set. We report mean squared error (MSE), mean absolute error (MAE), structural similarity index measure (SSIM)~\citep{wang2004ssim}, peak signal-to-noise ratio (PSNR), and learned perceptual image patch similarity (LPIPS)~\citep{zhang2018lpips}. MSE and MAE are summed over pixels and channels, giving scales of $196{,}608$ for Human3.6M and $61{,}440$ for KittiCaltech. Parameter counts and FLOPs are reported per input clip.

\paragraph{Tokenization.}
The models in Tables~\ref{tab:human36m} and~\ref{tab:kitticaltech} share the SimVP encoder--decoder~\citep{gao2022simvp}, with four convolutional blocks in both the encoder and the decoder, $64$ hidden channels, and $3{\times}3$ convolutions, and differ only in the token mixer between them. The encoder maps the input sequence to a latent tensor, and $2{\times}2$ latent patches define the tokens, yielding token tensors of size $4{\times}32{\times}32$ ($4{,}096$ tokens) on Human3.6M and $10{\times}16{\times}20$ ($3{,}200$ tokens) on KittiCaltech. All attention-based models add a learned absolute positional embedding, with one vector per token, before the first block.

\paragraph{Architecture.} On each benchmark, TuBA, ViT, linear attention, Linformer, and Nystr\"{o}m attention use the same transformer block, a pre-norm attention residual followed by a pre-norm SwiGLU residual, and differ only in the attention operation. The block has width $256$, eight heads of size $32$, and dropout $0.1$, with SwiGLU hidden size $682$ on Human3.6M and $256$ on KittiCaltech. TuBA uses Tucker ranks $(3,16,16)$ on Human3.6M and $(9,11,13)$ on KittiCaltech; ViT applies full attention over all tokens; Linformer projects the keys and values to length $128$; and Nystr\"{o}m attention uses $64$ landmarks. SimVP instead uses its original IncepU translator with width $256$.

\paragraph{Optimization.}
All models are trained in \texttt{fp32} using AdamW, weight decay $0.01$, and a cosine schedule decaying to $10^{-6}$, without data augmentation, gradient clipping, or drop path. Human3.6M models are trained for $50$ epochs with batch size $4$, and KittiCaltech models for $100$ epochs with batch size $8$.

\paragraph{Throughput.}
FPS is measured at inference with batch size $4$ in \texttt{fp32}. FPS$^*$ uses mixed precision, with a fused FlashAttention kernel~\citep{dao2022flashattention} when available and \texttt{bf16} arithmetic otherwise.
\begin{table}[h]
\centering
\small
\caption{Effect of Tucker ranks on TuBA's accuracy on KittiCaltech.}
\label{tab:rank-ablation}
\begin{tabular}{@{}r l rrrrr@{}}
\toprule
$R$ & Ranks & MSE$\downarrow$ & MAE$\downarrow$ & SSIM$\uparrow$ & PSNR$\uparrow$ & LPIPS$\downarrow$ \\
\midrule
720  & $(9,8,10)$   & 170.54 & 1737.3 & 0.8889 & 26.74 & 0.0881 \\
891  & $(9,9,11)$   & 165.20 & 1665.5 & 0.8881 & 26.97 & 0.0898 \\
972  & $(9,9,12)$   & 172.03 & 1757.1 & 0.8855 & 26.66 & 0.0972 \\
1080 & $(9,10,12)$  & 162.53 & 1641.4 & 0.8948 & 27.15 & 0.0836 \\
1188 & $(9,11,12)$  & 158.15 & 1654.2 & 0.8957 & 27.14 & 0.0832 \\
1287 & $(9,11,13)$  & \textbf{156.07} & 1641.7 & \textbf{0.8963} & \textbf{27.25} & \textbf{0.0772} \\
1620 & $(9,12,15)$  & 158.43 & \textbf{1608.7} & 0.8955 & 27.17 & 0.0820 \\
1728 & $(9,12,16)$  & 160.17 & 1624.8 & 0.8950 & 27.15 & 0.0841 \\
3200 & $(10,16,20)$ & 176.98 & 1770.7 & 0.8720 & 26.56 & 0.1132 \\
\bottomrule
\end{tabular}
\end{table}
\subsection{Rank ablation details}
\label{sec:rank-ablation}

Table~\ref{tab:rank-ablation} reports the full results of the Tucker rank sweep on KittiCaltech. All models share the same backbone as in Appendix~\ref{sec:video-details}, with the $10{\times}16{\times}20$ token tensor, and differ only in the Tucker ranks; the core size $R$ is their product. The temporal rank is fixed at $9$ for all reduced cores, so the sweep varies the two spatial ranks, while $R=3200$ corresponds to the full-rank setting with no reduction. The core $R=1287$ attains the best MSE, SSIM, PSNR, and LPIPS, while the next larger core, $R=1620$, attains the lowest MAE.

\subsection{Weather Forecasting Implementation Details}
\label{sec:weather-detials}
\paragraph{Data and forecasting task.}
We use ERA5~\citep{hersbach2020era5} on a $128{\times}256$ latitude--longitude grid at $1.40625^\circ$ resolution, subsampled from the hourly archive at six-hour intervals. Each example contains 12 input frames and 12 target frames. Models are trained on 2011--2015 and evaluated on 2017--2018. The state contains 25 channels: geopotential ($z$), relative humidity ($r$), temperature ($t$), and horizontal wind components ($u$, $v$), each at $50$, $500$, $850$, and $1000\,\mathrm{hPa}$, together with five surface or single-level fields: 2\,m temperature ($t_{2\mathrm{m}}$), 10\,m wind components ($u_{10}$, $v_{10}$), total cloud cover ($\mathrm{tcc}$), and top-of-atmosphere incident solar radiation ($\mathrm{tisr}$).

\paragraph{Tokenization and architecture.}
TuBA and the attention baselines use non-overlapping $8{\times}8$ spatial patches, producing a $16{\times}32$ token grid per frame and a $12{\times}16{\times}32$ token tensor ($6{,}144$ tokens) per 12-frame context, and reconstruct predictions with a linear output head followed by unpatching, without a convolutional encoder--decoder. All attention-based models, including TuBA, add a learned absolute positional embedding, with one vector per token, before the first block. TuBA, ViT, linear attention, Linformer, Nystr\"{o}m attention, and the autoregressive TuBA and ViT use the same transformer block as in the video experiments, with width $256$, four heads of size $64$, SwiGLU hidden size $682$, and no dropout, and differ only in the attention operation. One-shot TuBA uses Tucker ranks $(10,8,16)$ along the temporal, latitudinal, and longitudinal token axes; Linformer projects the keys and values to length $256$, and Nystr\"{o}m attention uses $64$ landmarks. SimVP retains its convolutional encoder--decoder and IncepU translator with spatial width $64$ and translator width $176$.

\paragraph{Autoregressive generation.}
Autoregressive TuBA uses block-causal attention with a granularity of four frames per tensor, so the 12-frame context forms three tensors, each with Tucker ranks $(3,8,16)$. Training uses teacher forcing over the concatenated context and target sequence in a single forward pass. At inference, the model generates three successive four-frame tensors, extending the sequence from 12 to 24 frames, with a per-layer key--value cache that processes the context once and is then updated with each generated tensor. Learned token positional embeddings span one four-frame tensor and are shared across tensors.

\paragraph{Evaluation metrics.}
For each channel $c$, we compute $\mathrm{NRMSE}_c=\mathrm{RMSE}_c/\sigma_c$, where $\sigma_c$ is the standard deviation estimated from the training data. The aggregate score is the mean of the 25 channel-wise values. FLOPs are measured per complete 12-frame forecast, and throughput is the best measured number of complete forecasts per second at batch size eight.

\subsection{Training memory and wall-clock time}
\label{sec:memory-walltime}

Table~\ref{tab:memory-walltime} reports the peak GPU memory of one \texttt{fp32} training step and the average wall-clock time per training epoch on a single NVIDIA H100 GPU. Memory is the total held by the training process at that peak, read from \texttt{nvidia-smi}, and therefore includes live tensors, the reserved allocator cache, and the CUDA context. TuBA reduces training memory relative to standard self-attention by $3.5$--$5.1\times$ and trains $1.7$--$4.3\times$ faster per epoch across the three benchmarks, while remaining comparable to the efficient-attention baselines.

\begin{table}[h]
\centering
\small
\caption{Peak training GPU memory (GiB) and wall-clock time per epoch (minutes) on a single H100 GPU in \texttt{fp32}. Batch sizes are 4 for Human3.6M and 8 for KittiCaltech and ERA5.}
\label{tab:memory-walltime}
\begin{tabular}{@{}l rr rr rr@{}}
\toprule
& \multicolumn{2}{c}{Human3.6M} & \multicolumn{2}{c}{KittiCaltech} & \multicolumn{2}{c}{ERA5} \\
\cmidrule(lr){2-3} \cmidrule(lr){4-5} \cmidrule(l){6-7}
Method & Memory & Time/epoch & Memory & Time/epoch & Memory & Time/epoch \\
\midrule
TuBA        & \textbf{10.22} & 29.3 & 15.05 & 1.68 & \textbf{11.99} & 7.50 \\
ViT         & 38.72 & 95.7 & 52.28 & 7.23 & 61.61 & 12.68 \\
Linear      & 10.61 & \textbf{23.7} & 10.86 & \textbf{1.10} & 13.35 & 7.67 \\
Linformer   & 10.47 & 25.2 & \textbf{10.04} & 1.30 & 13.42 & 7.37 \\
Nystr\"{o}m & 10.96 & 29.5 & 12.54 & 1.57 & 12.96 & \textbf{7.31} \\
\bottomrule
\end{tabular}
\end{table}

\end{document}